\documentclass[12pt,a4paper]{article}

\usepackage[normalem]{ulem}

\usepackage{amssymb}
\usepackage{amsthm}
\usepackage{url}
\usepackage{multirow}
\usepackage{enumitem}
\usepackage{graphicx}
\usepackage{color}

\begin{document}
\newtheorem{lemma}[theorem]{Lemma}



\title{Mutual information and sensitivity analysis for feature selection in customer targeting: a comparative study}


\author{N\'estor Ruben Barraza$^1$ \and
S\'ergio Moro$^2$ \and
Marcelo Ferreyra$^3$ \and
Adolfo de la Pe\~na$^4$}

\date{
\begin{center}
$^1$ Universidad Nacional de Tres de Febrero, Caseros, Argentina\\
$^2$ Instituto Universit\'ario de Lisboa (ISCTE-IUL), ISTAR-IUL, Lisboa, Portugal \\
ALGORITMI Research Centre, University of Minho, Guimarães, Portugal \\
$^3$ Dataxplore, Trenque Lauquen, Argentina\\
$^4$ Boldt Gaming, Buenos Aires, Argentina\\
\end{center}
}

\maketitle

\begin{abstract}

Feature selection is a highly relevant task in a data-driven knowledge discovery project. Several techniques have been developed aiming at finding the features that influence most an outcome to predict, including mutual information and, in recent years, the data-based sensitivity analysis. The present research focus on analyzing the advantages and disadvantages of each of these two techniques, by applying both to a bank telemarketing case. Thereafter, a logistic regression model is built on the tuned set of features identified by each of the two techniques as the most influencing set of features on the success of a telemarketing contact, in a total of 13 features for mutual information and 9 features for the data-based sensitivity analysis. The latter performs better for lower values of false positives while the former is slightly better for a higher false positive ratio. Thus, mutual information becomes a better choice if bank managers intend to reduce slightly the cost of contacts without risking losing a high number of successes. Such results show that mutual information, although not recent, is still a valid method for feature selection. On the other side, the data-based sensitivity analysis’ selection achieved good prediction results with less features.
\\

{\bf Keywords}: Feature selection; Mutual information; Sensitivity analysis; Customer targeting; Direct marketing; Modeling.

\end{abstract}


\section{Introduction}

Customer targeting (CT) is a classical problem addressed by Business Intelligence (BI) methods and techniques. It involves finding the right customers who to target at within the context of a marketing campaign for selling the campaign product or service \cite{turbandecision}. Typical cutting edge approaches include using data mining (DM) for unveiling the potential knowledge through patterns of information hidden in big data repositories \cite{witten2005data}. DM adopts the best practices inherited both from classical statistics and artificial intelligence, in an attempt to take advantage from both to enhance knowledge extraction from raw data \cite{grossman2013data}.

In recent years, industries worldwide have experienced peaks and troughs of enthusiasm arisen from the high expectations of benefiting from novel technologies and approaches introduced by DM \cite{gandomi2015beyond}. Discovering the best customers for targeting at a specific moment in time has proven to be NP-hard \cite{nobibon2011optimization}. In real world, a vast number of characteristics and contextual specificities may potentially affect customer's receptivity for acquiring a product. While recent technologies and DM procedures have progressively been increasing their capabilities of analyzing large quantities of data, there is a growing need derived from data availability to identify which are the features that may potentially influence an outcome and which are the ones that are irrelevant and should be discarded, for these may directly contribute for misleading DM algorithms \cite{moro2014data}. Also, the larger the number of features, the slower and more complex is the execution of the DM algorithm in pursuit for the best possible solution, given the exponential growth of possibilities that the algorithm needs to explore \cite{wu2014data}. Hence, feature selection is a highly relevant task in any DM approach, constituting a key step where a large portion of the global effort should be spent \cite{domingos2012few}.

Several techniques have been introduced and applied for feature selection. In \cite{chandrashekar2014survey} the authors conducted a survey, identifying three main methods, filter, wrapper, and embedded, while also mentioning the application of other techniques such as using unsupervised learning and ensemble methods. Table~\ref{table:ftselectioncategories} summarizes their categorization.

\begin{table}[!h]
\centering
\scalebox{0.8}{
\begin{tabular}{|p{3cm}|p{8cm}|p{4cm}|}
\hline
\textbf{Method} & \textbf{Description} & \textbf{Examples of techniques} \\ 
\hline
\multirow{2}{*}{Filter} & \multirow{2}{\linewidth}{variable ranking techniques as the principle criteria for variable selection by ordering} & Correlation criteria\\
& & Mutual information\\
\hline
\multirow{3}{*}{Wrapper} & \multirow{3}{\linewidth}{use the predictor as a black box and the predictor performance as the objective function to evaluate the variable subset} & Sequential selection algorithms\\
& & Heuristic search algorithms\\
& & Sensitivity analysis\\
\hline
Embedded & reduce the computation time taken up for reclassifying different subsets which is done in wrapper methods by incorporating the feature selection as part of the training process & SVM-RFE (Recursive Feature Elimination) \\
\hline
\multirow{3}{*}{Others} & \multirow{3}{\linewidth}{several techniques that do not fit in the remaining three methods} & Clustering \\
& & Ensemble\\
& & Sensitivity analysis\\
\hline
\end{tabular}
}
\caption{Feature selection methods (adapted from \cite{chandrashekar2014survey})}
\label{table:ftselectioncategories}
\end{table}

Mutual information (MI) is one of the most widely adopted feature selection techniques, with the earliest studies dating back to the nineteen nineties \cite{battiti1994using}. The concept underlying MI is to measure the mutual dependence between two random features by identifying how much information of one of the features can be obtained from the other feature. Thus, it is linked to the entropy of a random feature, given by the amount of information held in the feature \cite{paninski2003estimation}.

The usage of sensitivity analysis (SA) for feature selection in DM projects has been studied at least since the dawn of the new millennium 
\cite{embrechts2003data}. The main idea behind SA is to assess model's sensitivity to the variation of each of the input features on the predicted outcome: the more sensitive is the model, the more relevant is the effect of changing the input feature on the outcome. In this context, SA may be considered a wrapper method, according to categorization identified in Table~\ref{table:ftselectioncategories}, even though SA may also be included within the model training process (thus, in this latter case, it would become an embedded method).

Most DM projects need to include a data preparation step, where usually occurs a feature selection procedure \cite{Han}. Being CT a typical problem addressed through DM, makes of it an ideal candidate for the application of feature selection methods. Thus, several studies were published related to the application of feature selection to CT \cite{tan2012feature,moroframework}.
A recent study authored by \cite{tan2013impact} verses on the impact of feature selection in direct marketing. Their work analyzed three filter methods for feature selection (correlation-based feature selection, subset consistency, and symmetrical uncertainty), concluding that symmetrical uncertainty resulted in better models, outperforming others using the remaining two methods studied and a model without any feature selection procedure.

While there are several studies published on feature selection using MI, and a few using SA, none performed a direct comparison on both methods to assess the pros and cons on using each. Furthermore, even though a handful of recent studies were found comparing feature selection methods through practical applications \cite{tan2013impact}, none considered SA.
The main contributions of this paper are as follows:

\begin{itemize}
\renewcommand{\labelitemi}{$\bullet$}
\item Comparing mutual information with sensitivity analysis for feature selection, by testing both methods on a real case CT problem;
\item Assessing the advantages and disadvantages of adopting each of the methods for feature selection, by cross-validating the results achieved on the experiments with real data with the background provided by the literature on the subject;
\item Drawing the insights on each method that may lead scholars and researchers on the adoption of each for a wide range of data-driven approaches to address real-world problems.
\end{itemize} 

This paper is organized as follows. Section \ref{it} presents a summary on the literature for MI, SA and feature selection applied to CT. In Section \ref{arc}, the materials and methods adopted for the experiments are described. Results and evaluation of applying both methods are discussed in Section \ref{sec:res}. Finally, conclusions are drawn in the last section.

\section{Background}\label{it}

\subsection{Mutual information}\label{sec:mi}

Entropy and mutual information (MI) are well known concepts in Communications and Information Theory. They were originally introduced by Claude Shannon in a seminal paper \cite{shannon48}, in order to find the optimal coding of a source on one hand and a noisy channel on the other. Entropy is related to uncertainty  or information content of a random variable. From this point of view, an event $i$ having probability of occurrence $p_i$ has an information content of:

\begin{equation}
\label{informationcontent}
I = -\log p_i
\end{equation}
 
The base of logarithm defines the unit, base two logarithm gives units in bits. When more probable is the event, less information gives its occurrence. The expectation of (\ref{informationcontent}) gives the average information content of such set of events:

\begin{equation}
\label{entropy}
H(X) = - \sum p_i \log p_i 
\end{equation}
The expression (\ref{entropy}) is the entropy of the random variable $X$ in such a way that the event $i$ corresponds to the value $x_i$, i.e. $p(x_i) = p_i$.

Entropy is bounded by the cardinality of the set of outcomes: $ H(X) \le \log |\mathcal{X}|$ and attains its maximum when the events follow a uniform distribution $ p_i = \frac{1}{|\mathcal{X}|}$. The bigger the entropy, the more random are the events, then, the occurrence of an event gives more information, though they are less predictable. 
 
Considering two random variables with a given joint probability $p(X,Y)$, the joint entropy is defined as:

\begin{equation}
\label{jointentropy}
H(X,Y) = - \sum_{x \in \mathcal{X}, y \in \mathcal{Y}} p(x,y) \log p(x,y) 
\end{equation}

And the conditional entropy is defined as:

\begin{equation}
\label{condentropy}
H(X|Y) = - \sum_{x \in \mathcal{X}, y \in \mathcal{Y}} p(x,y) \log p(x|y) 
\end{equation}

Mutual information between two random variables is defined as follows:

\begin{equation}
\label{mutualinformation}
I(X;Y) = \sum_{x \in \mathcal{X}, y \in \mathcal{Y}} p(x,y) \log \frac{p(x,y)}{p(x) \; p(y)} 
\end{equation}

From (\ref{mutualinformation}) we can derive a relation between mutual information and entropy:

\begin{eqnarray}
\label{mutualinformationentropy}
I(X;Y) &=& H(X) - H(X|Y) \nonumber \\
&=& H(Y) - H(Y|X) 
\end{eqnarray}

Mutual information definition can be extended to sets of random variables $X^n = \{X_1, X_2, \cdots, X_n\}$ and $Y^n = \{Y_1, Y_2, \cdots, Y_n\}$:

\begin{equation}
\label{mutualinformationsets}
I(X^n;Y^n) = \sum_{x^n \in \mathcal{X}^n, y^n \in \mathcal{Y}^n} p(x^n,y^n) \log \frac{p(x^n,y^n)}{p(x^n) \; p(y^n)} 
\end{equation}
where $\mathcal{X}^m$ and $\mathcal{Y}^m$ are the set of outcomes of $x^n$ and $y^m$.

A situation where redundant information can be removed arises when variables are connected in a Markov chain, as shown in Figure \ref{conn2}.  A well known relation for this case is given by the data processing inequality $I(X_1;X_2) \ge I(X_1;Y)$, an alternative inequality  is demonstrated in Lemma \ref{lema2} in a similar way.

\begin{figure}[!ht]
  \centering
  \includegraphics[width=2.5in,page=2,scale=0.6]{graficos}
  \caption{Connection of variables for lemma \ref{lema2}}\label{conn2}
\end{figure}

\begin{lemma}
\label{lema2}
If the random variables $X_1$, $X_2$, $Y$ are connected in a Markov chain $ X_1 \rightarrow X_2 \rightarrow Y$, then:
\begin{enumerate}
\item\label{prim} $I(X_1,X_2;Y) = I(X_2;Y)$ 
\item\label{sec} $I(X_2;Y) \ge I(X_1;Y)$
\end{enumerate}
\end{lemma}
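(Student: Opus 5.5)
Both items will follow from the Markov property, which I write as $p(y\mid x_1,x_2)=p(y\mid x_2)$, or equivalently as the factorization $p(x_1,x_2,y)=p(x_1)\,p(x_2\mid x_1)\,p(y\mid x_2)$. The plan is to prove item \ref{prim} directly from the definition (\ref{mutualinformationsets}) and then obtain item \ref{sec} from it using the chain rule together with nonnegativity of mutual information.

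For item \ref{prim}, I treat the pair $(X_1,X_2)$ as a single variable in (\ref{mutualinformationsets}):
\[
I(X_1,X_2;Y)=\sum_{x_1,x_2,y} p(x_1,x_2,y)\log\frac{p(y\mid x_1,x_2)}{p(y)} .
\]
By the Markov property the ratio equals $p(y\mid x_2)/p(y)$, and this no longer depends on $x_1$. I then sum $p(x_1,x_2,y)$ over $x_1$ to get $p(x_2,y)$. What remains is exactly $\sum_{x_2,y}p(x_2,y)\log\frac{p(x_2,y)}{p(x_2)p(y)}=I(X_2;Y)$. The same computation can be phrased with entropies: $I(X_1,X_2;Y)=H(Y)-H(Y\mid X_1,X_2)$ and $H(Y\mid X_1,X_2)=H(Y\mid X_2)$, followed by (\ref{mutualinformationentropy}).

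For item \ref{sec}, I expand the same quantity the other way:
\[
I(X_1,X_2;Y)=H(Y)-H(Y\mid X_1,X_2)=\bigl(H(Y)-H(Y\mid X_1)\bigr)+\bigl(H(Y\mid X_1)-H(Y\mid X_1,X_2)\bigr).
\]
This is the chain rule $I(X_1,X_2;Y)=I(X_1;Y)+I(X_2;Y\mid X_1)$. The conditional mutual information $I(X_2;Y\mid X_1)$ is an average over $x_1$ of ordinary mutual informations, each a Kullback--Leibler divergence. It is therefore nonnegative by Jensen's inequality, or equivalently because conditioning reduces entropy. Hence $I(X_1,X_2;Y)\ge I(X_1;Y)$, and combining this with item \ref{prim} gives $I(X_2;Y)\ge I(X_1;Y)$.

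The main obstacle is that the paper has not yet stated the chain rule or the nonnegativity of (conditional) mutual information. I will therefore justify both briefly from the definitions. The chain rule is a telescoping of logarithms: $\log\frac{p(y\mid x_1,x_2)}{p(y)}=\log\frac{p(y\mid x_1)}{p(y)}+\log\frac{p(y\mid x_1,x_2)}{p(y\mid x_1)}$. Nonnegativity follows from $-\log$ being convex, via Jensen's inequality. Neither step is expected to be difficult.
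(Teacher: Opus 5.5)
Your proof is correct and is essentially the paper's argument. The paper also expands $I(X_1,X_2;Y)$ by the chain rule in both orders. It uses the Markov property in the form $I(X_1;Y\mid X_2)=0$ to get item 1, and your direct summation over $x_1$ verifies exactly that; it then uses nonnegativity of $I(X_2;Y\mid X_1)$ in the other expansion to get item 2, with you being slightly more explicit that it is the conditional mutual information whose nonnegativity is needed and in justifying the chain rule and nonnegativity from the definitions.
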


\begin{proof}
Applying twice the chain rule to the mutual information:
\begin{eqnarray}
I(X_1,X_2;Y) &=& I(X_1;Y|X_2) + I(X_2;Y) \label{cr1}\\
&=& I(X_2;Y|X_1) + I(X_1;Y) \label{cr2}
\end{eqnarray}

By the Markov property:

\begin{equation}
\label{mp}
I(X_1;Y|X_2) = 0
\end{equation}

Replacing (\ref{mp}) in (\ref{cr1}) we prove the first part. 

Taking into account that the mutual information is always greater than 0 (see for example \cite{Cover}), from  (\ref{cr1}), (\ref{cr2}) and (\ref{mp}) we get $ I(X_2;Y) \ge I(X_1;Y)$, and the lemma is proved. 

\end{proof}

Then, we can point out that  these functions from the information theory are quite suitable to eliminate redundant information, not generally taken into account with other methods.


A communication channel is a device or medium capable of transmitting information. The input information is carried out to the output. Since any mechanism of transmitting information is not perfect, some noise is introduced in the process.  Then, the input and output information are not the same but related. We can use the mutual information as a measure of that relation. We will consider thereafter that $X$ is the random variable at the input and $Y$ is the output random variable.

According to the source coding theorem, we know that the entropy is a measure of the average bits of information necessary to code the outcomes of a given random variable. In this way, $H(X)$ is a measure of the input information to the channel, $H(Y)$ the information content at the output, $I(X;Y)$ the transmitted information, and taking into account the relations (\ref{mutualinformationentropy}): $H(Y|X)$ is a measure of the noise introduced by the channel. The conditional entropy $H(X|Y)$ is called equivocation or ambiguity and it must be subtracted to the input information in order to obtain the transmitted information. According to the channel coding theorem, the Mutual Information gives the channel capacity and determines the maximum rate of information transmitted by the channel, see details in \cite{shannon48} and \cite{Cover}.\\

Mutual Information based feature selection consists in choosing the set of variables raising the most of the information of the output variable following a given criterion. The process starts by adding features from those carrying the most of the information until a stopping criterion is reached. Since the mutual information between the output variable $Y$ and a given subset of input variables $X^m$ is given by:
\begin{equation}
\label{miset}
I(X^m;Y) = H(Y) - H(Y|X^m)
\end{equation}
the estimation of $I(X^m;Y)$ involves the estimation of $P(Y|X^m)$. Then, the bigger the set $X^m$, the less reliable is the estimation of $P(Y|X^m)$. Therefore, the maximum of the product of information gain with reliability of estimation is used as the stopping rule, as indicated in the flow chart shown in fig. \ref{mimethod}. Other stopping rules for variables addition were considered in \cite{Hall:2003:BAS:951848.951906}. Note that the relative information gain $IG^m=\frac{I(X^m;Y)}{H(Y)}$ goes from $0$ when $X^m$ and $Y$ are independent and $H(Y|X^m) = H(Y)$ to 100\% when $Y$ is a deterministic function of $X^m$ and $H(Y|X^m) = 0$.

\begin{figure}[!ht]
  \centering
  \includegraphics[width=3in,page=1,scale=0.6]{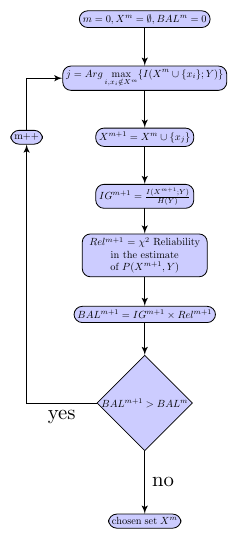}
  \caption{MI feature selection method}\label{mimethod}
\end{figure}

As a result, MI theory may help to cope with the task of feature selection by identifying the subset of features that minimizes $H(Y|X)$, i.e., that maximizes the information underlying in the dataset, $I(X;Y)$. Thus, features introducing further entropy in the communication channel may be discarded, helping to guide DM algorithms in building a model that understands the intrinsic relationships underneath the original data.

\subsection{Sensitivity analysis}\label{sec:sa}

The usage of sensitivity analysis (SA) for providing insights on complex models dates back from the nineteen nineties, with a special emphasis on climate and environmental models \cite{homma1996importance}. However, the advent of DM has introduced highly complex models with intrinsic convoluted relations that are hardly disentangled. Some of the most widely used of those models include machine learning algorithms such as neural networks in various formats and versions, and support vector machines. SA has been proposed and analyzed in the literature for input feature evaluation from data mining models. In Ref. \cite{kewley2000data} a computationally efficient one-dimensional method is presented, by varying one input at a time through its possible range of values and keeping the remaining input features constant. Subsequent studies explored further SA as a means for understanding the impact each of the features that contributed to a model implementation had on the predicted outcome \cite{embrechts2003data,kondapaneni2007visualization}.
By providing a procedure to evaluate the relevance of input features from models, several studies have included SA as a method for feature selection, hence choosing the most relevant features and discarding the least relevant \cite{liu2012feature}. Although SA requires that a model is previously available for assessing feature relevance since it focus solely on the features, it can virtually be applied to any type of predictive model. Recent developments resulted in novel SA techniques such as the data-based SA 
(DSA), introduced by \cite{cortez2013using} in 2013. This procedure uses random samples from the data used to train the model for assessing feature relevance by changing the input features simultaneously, thus considering the relations between features (Figure~\ref{fig:dsa}).

\begin{figure}[!htp]
  \centering
  \includegraphics[scale=0.7]{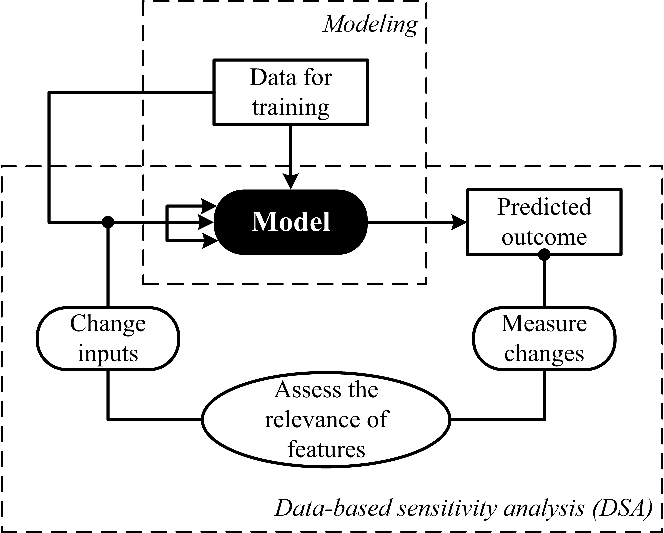}
  \caption{Data-based sensitivity analysis}
  \label{fig:dsa}
\end{figure}

DSA has been applied since then in a large spectrum of domains and problems, such as bank marketing \cite{moro2015using}, wine quality assessment \cite{cortez2009modeling}, jet grouting formulations \cite{tinoco2012application} and social media performance metrics \cite{moro2016predicting}. However, the only study using DSA specifically for feature selection is the work by Moro, Cortez and Rita, which applied such technique in a bank telemarketing case and resulted in a thread of published articles \cite{moro2014data,moro2015using,moroframework}. Their work adopted DSA but lacked in assessing the advantages and disadvantages of sensitivity analysis when compared to other feature selection methods. The present paper represents the first attempt in filling such gap.

\subsection{Feature selection in customer targeting}\label{sec:ftct}

Customer targeting (CT) is the marketing procedure of optimizing the selection of customers who to target within the context of a marketing campaign to meet campaign goals, usually, the acquisition of a product or service \cite{cole2012internet}. CT can be viewed as a branch of an integrated Customer Relationship Management (CRM) strategy with a focus on building customer equity \cite{richards2008customer}. Other terms that are directly related to CT include direct marketing and database marketing, with the former being almost a synonymous \cite{kim2004intelligent}, while the latter can be also associated with the need for a customer database to support CRM strategies \cite{richards2008customer}. It should be noted that the task of discovering the most likely positive responders to customer targeting has been proven to be NP-hard \cite{nobibon2011optimization}. CT provides an interesting ground for testing predictive machine learning techniques, with a large number of published studies alleging the discovery of predictive knowledge that may be used to benefit the success of CT \cite{kim2004intelligent,ngai2009application}. Nevertheless, few of those works have seen a real production environment, effectively leveraging business \cite{pinheiro2014heuristics}.

Feature selection is a key task in every DM projects \cite{domingos2012few}. The main goal is to find the minimum set of features that optimize results translated in terms of model accuracy in fitting new data for the problem being addressed \cite{witten2005data}. Also, by reducing the number of features used for modeling, the procedure for training the model becomes lesser computationally expensive, making it feasible to be executed on a daily or more frequent basis, for incorporating the subtle changes derived from immediate previous contacts \cite{liu2005toward}. For example, a bad news on the company or product widely spread through social media may directly affect the subsequent contacts \cite{goldenberg2007npv}.
Thus, model retraining for learning with new occurrences needs to occur often. One option is to use a rolling windows procedure where the window of data for training the model slides for keeping pace with time, an approach that may be adopted for several time evolving problems such as stock markets \cite{romero2015nonlinearities} and telemarketing \cite{moro2014data}. The longer the algorithm takes to run the modeling procedure, the more likely the model does not adapt quickly enough to new information. Hence, selecting the right amount of features is in demand for problems with constant shifts in the influence features have on the outcome.

Feature selection using MI has been a subject of research in numerous problems. Moreover, studies are usually devoted to testing new feature selection approaches to well-known datasets, not focusing explicitly on the advantages to the business associated with the problems being addressed \cite{hoque2014mifs,hancer2015multi}. However, no studies were found on feature selection using MI specifically focusing on CT, only a few papers published related to customer churning \cite{idris2012customer,verbraken2014profit}. DSA application to CT for feature selection has been the subject of study of Moro et al., as stated in Section~\ref{sec:sa}. The present study is focused in filling such void while at the same time performing a novel comparison between both methods, MI and DSA.

\section{Materials and Methods}\label{arc}

\subsection{Real case}

Bank telemarketing is a specific case of direct marketing where the customers of a bank are contacted and offered products or services through phone calls, although other direct channels such as email may be used \cite{moro2014data}. For the experiments presented in this paper, the dataset published in the University of California Machine Learning Repository (http://archive.ics.uci.edu/
ml/) was adopted. Such dataset was studied by numerous scholars and researchers, as the high number of page hits shows, above two hundred thousand. As a result, several studies have been published using its data, with the most for assessing machine learning and DM algorithms' capabilities \cite{verbraken2014profit}, and a few for feature selection \cite{vajiramedhin2014feature}.
This dataset encompasses a total of 41,188 phone contacts conducted by human agents from a Portuguese bank between 2008 and 2010, with the goal of selling an attractive long-term deposit, in an attempt of retaining customers' financial assets in the institution. It should be stressed that all contacts are real, implying that it represents a real problem and to which feature selection may provide interesting benefits in reducing the features needed for modeling the outcome, choosing only influencing features while at the same time reducing model retraining duration.
Each contact is characterized by twenty features, with some related to personal customer data (e.g., age), others to the contact itself (e.g., call duration) and previous calls made within the context of older campaigns (e.g., the outcome of previous contact), and the remaining related to the social and economic context that characterizes the country (e.g., number of employed people).
Table~\ref{table:listft} describes the list of features. More details can be obtained from Ref. \cite{moro2014data}. The target outcome is the 21\textsuperscript{st} feature from the dataset, concealing a binary value (yes/no) which represents the contact result: ``yes'' if the customer subscribed the deposit; ``no'' otherwise.

\begin{table}[!h]
\centering
\scalebox{0.8}{
\begin{tabular}{|p{2.5cm}|p{10cm}|p{2.5cm}|}
\hline
\textbf{Feature} & \textbf{Type and description} & \textbf{Group} \\ 
\hline
age & numeric & \multirow{7}{*}{Customer}\\
job & type of job (categorical - 12 possible values)&\\
marital & marital status (categorical - 4 possible values)&\\
education & (categorical - 8 possible values)&\\
default & has credit in default? (categorical: ``no'',``yes'',``unknown'')&\\
housing & has housing loan? (categorical: ``no'',``yes'',``unknown'')&\\
loan & has personal loan? (categorical: ``no'',``yes'',``unknown'')&\\
\hline
contact & contact communication type (categorical: ``cellular'',``telephone'') & \multirow{4}{*}{Contact}\\
month & last contact month of year (categorical)&\\
day\_of\_week & last contact day of the week (categorical)&\\
duration & last contact duration, in seconds (numeric)&\\
\hline
campaign & number of contacts performed during this campaign & \multirow{4}{*}{Other}\\
pdays & number of days that passed by after the client was last contacted from a previous campaign &\\
previous & number of contacts performed before this campaign and for this client &\\
poutcome & outcome of the previous marketing campaign &\\
\hline
emp.var.rate & employment variation rate - quarterly indicator (numeric) & \multirow{5}{*}{Context}\\
cons.price.idx & consumer price index - monthly indicator (numeric)&\\
cons.conf.idx & consumer confidence index - monthly indicator (numeric)&\\
euribor3m & euribor 3 month rate - daily indicator (numeric)&\\
nr.employed & number of employees - quarterly indicator (numeric)&\\
\hline
\end{tabular}
}
\caption{List of input features (from https://archive.ics.uci.edu/ml/datasets/Bank+Marketing)}
\label{table:listft}
\end{table}

\subsection{Experimental procedure}

The bank telemarketing dataset was first assessed in terms of feature relevance by both methods studied, MI and DSA. Each method has its own specificities and procedures, as mentioned in sections \ref{sec:mi} and \ref{sec:sa}. MI evaluates the amount of information concealed in each of the features, whereas DSA assesses the model in terms of the influence on the outcome by changing the input features. 
The experimental setup for the case of MI is solely the dataset with the data, as detailed in Section~\ref{sec:mi}, while the DSA required that a model was previously built for assessing feature relevance in terms of the sensitivity of the model to changes on input features, as shown in Figure~\ref{fig:dsa}.
Therefore, while MI selects the features according to the information each of them contains when compared to the remaining, DSA ranks features in terms of relevance for the model. For the latter, all the features that did not encompass individually at least 2\% of relevance were discarded.
Figure~\ref{fig:proc} summarizes the approach followed.

\begin{figure}[h]
  \centering
  \includegraphics[scale=0.7]{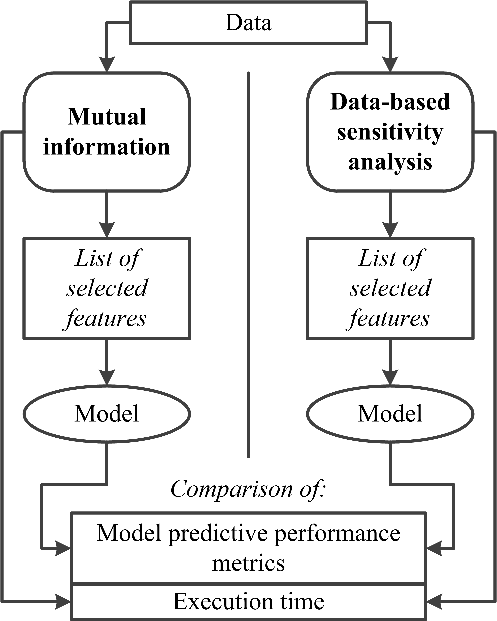}
  \caption{Procedure}
  \label{fig:proc}
\end{figure}
In order to simulate a sliding window, the dataset was divided successively in different training and testing sets using a ten fold cross-validation procedure. The testing set is composed by the contacts whose willingness of acquiring the product constitutes the outcome to predict \cite{refaeilzadeh2009cross}. This set is chosen by a window that takes 10\% of the total records, starts at the first record and shifts to the next 10\% of records without overlapping. In each experiment, the training set is composed by the other 90\% records. At each fold, from the training set, the features were selected and used for building a predictive model which is applied to the testing set, in a procedure similar to Ref. \cite{moro2014data}. At the end of the 10 fold experiments, a score of the probability of acquiring the offered product was computed for each contact. That score was then used to build the ROC curve and find the confusion matrices shown below for different cutoff probabilities.
The selected features are then used for implementing a simple logistic regression model to fit data for predicting the outcome on the contacts. The usage of logistic regression provides a direct means for measuring how modeling with the selected feature behaves in both cases, for allowing a direct comparison. While more complex machine learning techniques could be used (e.g., neural networks or support vector machines), the goal of the present study is to facilitate a comparison of both feature selection procedures, not putting emphasis on the modeling scenarios, where other studies have already focused on the analyzed dataset. Also, to keep coherence in all experiments, the logistic regression was also chosen for extracting feature relevance during feature selection from DSA. Additionally, since DSA is based on ranking feature relevance, for computing the model, all top ranked features with a summed relevance of at least 90\% were included, discarding the remaining.
Finally, the prediction results are analyzed in the light of comparing both methods using both the receiver operating characteristic curve (ROC) and confusion matrices. Also, computational performance is evaluated, for a lighter method in terms of execution may allow a global frequently run learning procedure to be scheduled more often, as stated in Section~\ref{sec:ftct}.

\section{Results and Discussion}\label{sec:res}

As remarked in \cite{moro2014data}, a predictive model does not hold any knowledge on future occurrences of the problem; thus, the dataset should be striped off of any feature only known after contact execution, such as the call duration. Furthermore, in a real predictive system, a campaign is launched without knowing when will the calls be made, as these depend on both agent and especially client availability. Therefore, for the experimental setup, all features related to the current campaign were removed, namely: "contact", ''month'', ''day'' and ''duration''. The only exception was "campaign", which deserved a detailed analysis: this field indicates the number of contacts performed during this campaign, i.e., how many times was the contact rescheduled, which may happen due to several reasons, such as a machine answered the call and the agent decided to reschedule it, or the client asked to be recalled later. By taking into account that a telemarketing campaign can last a year, one may consider a dynamical model that take this field into account, incorporating multiple calls to the same client within the same campaign; therefore, this feature was included in the present analysis.

Since the software powerhouse\footnote{http://www.dataxplore.com.ar/tecnologia.php\#Powerhouse} performs a segmentation process based on information theory, such product was chosen for obtaining the metrics. The software calculates the mutual information between each attribute and the output variable separately by estimating the joint probability, then, it chooses the set of variables carrying most of the information according to the criterion explained in sec. \ref{sec:mi}. The set of the selected variables are shown in table \ref{table:t1}.

\begin{table}
\centering
\scalebox{0.8}{
\begin{tabular}{|c|c|c|}
\hline
Name & Gain $=\frac{I(X;Y)}{H(Y)}$ (Average) & NF \\ 
\hline
cons.conf.idx  & 17.02\% & 9 \\
euribor3m      & 8.83\%  & 7 \\
campaign       & 5.80\%  & 7 \\
housing        & 5.80\%  & 10 \\
emp.var.rate   & 5.80\%  & 2 \\
marital        & 5.50\%  & 10 \\
nr.employed    & 5.07\%  & 3 \\
age            & 4.98\%  & 10 \\
education      & 4.37\%  & 10 \\
loan           & 4.27\%  & 6 \\
poutcome       & 2.81\%  & 8 \\
pdays          & 2.09\%  & 6 \\
job            & 1.74\%  & 10 \\
\hline
\end{tabular}
}
\caption{Selected variables by the MI method, NF stands for the number of folds the attribute has been taken into account because of its relevance.}
\label{table:t1}
\end{table} 

Elimination of redundant information occurs in the present data with the variable ''conf.price.idx''. This variable can be totally predicted by ''cons.conf.idx'' and ''nr.employed'' as shown in table \ref{table:t2}.

\begin{table}
\centering
\scalebox{0.8}{
\begin{tabular}{|c|c|c|}
\hline
Name & Gain $\frac{I(X;Y)}{H(Y)}$ & Confidence \\ 
\hline
cons.conf.idx & 94.47\%	& 96.11\% \\
nr.employed & 100.00\% & 95.72\% \\
\hline
\end{tabular}
}
\caption{Information content and prediction of the variable ''conf.price.idx''.}
\label{table:t2}
\end{table} 

As a result, we can say that the variables mentioned before are connected as shown in Fig. \ref{conn3}, in a similar way as it was presented in lemma \ref{lema2}.

\begin{figure}[!ht]
  \centering
  \includegraphics[width=5in,page=12]{graficos}
  \caption{Prediction of variable ''conf.price.idx''}\label{conn3}
\end{figure}


Similar information content tables are shown for the other discarded variables ''default'' and ''previous''.

\begin{table}
\centering
\scalebox{0.8}{
\begin{tabular}{|c|c|c|}
\hline
Name & Gain $\frac{I(X;Y)}{H(Y)}$ & Confidence \\ 
\hline
poutcome & 83.11\% & 97.44\% \\
nr.employed & 86.02\% & 95.62\% \\
pdays & 87.59\% & 95.33\% \\
job & 89.42\% & 95.05\% \\
education & 91.94\% & 94.92\% \\
\hline
\end{tabular}
}
\caption{Information content and prediction of the variable ''previous''.}
\label{table:t22}
\end{table} 

\begin{table}
\centering
\scalebox{0.8}{
\begin{tabular}{|c|c|c|}
\hline
Name & Gain $\frac{I(X;Y)}{H(Y)}$ & Confidence \\ 
\hline
euribor3m & 6.73\%	& 96.11\% \\
education & 11.61\% &95.22\% \\
age & 15.87\% & 95.05\% \\
job & 23.39\% & 93.28\% \\
campaign & 35.62\% & 79.30\% \\
marital & 45.07\% & 69.70\% \\
cons.conf.idx & 55.51\% & 58.51\% \\
\hline
\end{tabular}
}
\caption{Information content and prediction of the variable ''default''.}
\label{table:t23}
\end{table} 

For the case of DSA feature selection, the R statistical tool was chosen (https://cran.r-project.org/). R is an open source framework focusing on data analysis problems, allowing contributions with independent packages developed by numerous researchers worldwide \cite{ihaka1996r}. Additionally, the rminer package implements the DSA algorithm \cite{cortez2010data}; therefore, it was adopted for all experiments related to DSA. Selected variables by DSA method are shown in table \ref{table:t12}. As previously explained, it should be noted that within each fold of execution, all features summing up to 90\% of global relevance were included. Interestingly, these features were considered for each of the ten folds, emphasizing its relevance for the problem and data being addressed.

\begin{table}[!h]
\centering
\scalebox{0.8}{
\begin{tabular}{|c|c|c|c|}
\hline
Name & Relative relevance (average) & Summed relevance & NF \\ 
\hline
emp.var.rate & 26.29\% & 26.81\% & 10\\
pdays & 13.60\% & 40.41\% & 10\\
nr.employed & 12.36\% & 52.77\% & 10\\
cons.price.idx & 10.86\% & 63.64\% & 10\\
default & 10.28\% & 73.92\% & 10\\
poutcome & 6.50\% & 80.42\% & 10\\
job & 3.53\% & 83.95\% & 10\\
cons.conf.idx & 3.43\% & 87.38\% & 10\\
campaign & 3.22\% & 90.61\% & 10\\
marital & 3.03\% & 93.64\% & 9\\
age & 2.28\% & 95.92\% & 8\\
education & 1.66\% & 97.58\% & 0\\
euribor3m & 0.89\% & 98.47\% & 0\\
previous & 0.73\% & 99.20\% & 0\\
loan & 0.43\% & 99.63\% & 0\\
housing & 0.37\% & 100.00\% & 0\\
\hline
\end{tabular}
}
\caption{Selected variables by the DSA method.}
\label{table:t12}
\end{table}

Once a set of highly relevant features are selected based on the information content for MI, or based on model sensitivity to such features for DSA (Table~\ref{table:ftselected}), a predictive model may be built using any available DM technique, from naive Bayes to the most recent support vector machines, neural networks and genetic algorithms (e.g., \cite{moro2014data}). Considering the focus of this study is feature selection, a simple logistic regression method was chosen for predicting contact outcome, for assessing the efficiency and accuracy of the features selected with both methods. This is the simplest algorithm of those analyzed by \cite{moro2014data}. The ROC curves for both models are drawn in Figure~\ref{roc-curve}. For further understanding the effects of using each predictive model built on each set of features, four confusion matrices are computed. On Tables \ref{table:t3MI} and \ref{table:t3DSA}, results are extracted considering a typical cutoff probability 0.5, i.e., in which the most likely outcome is considered a success if the model predicts it with 50\% or more of probability, whereas on  Tables \ref{table:t31MI} and \ref{table:t31DSA} the cutoff is lowered to just 10\%, to account for the fact that this particular bank intends to increase efficiency with a especial emphasis on avoiding loosing successful contacts, considering lost deposit subscriptions directly implicates on missing business opportunities for retaining important financial assets in a crisis period (thus, the cost of loosing a successful contact is much higher than the gain of avoiding an unuseful unsuccessful contact) \cite{moro2014data}.

\begin{table}[!h]
\centering
\scalebox{0.8}{
\begin{tabular}{|c|c|c|}
\hline
Feature & MI & DSA \\ 
\hline
emp.var.rate & X & X \\
pdays & X & X \\
nr.employed & X & X \\
cons.price.idx &  & X \\
default &  & X \\
poutcome & X & X \\
job & X & X \\
cons.conf.idx & X & X \\
campaign & X & X \\
marital & X &  \\
age & X & \\
education & X &  \\
euribor3m & X &  \\
previous &  &  \\
loan & X &  \\
housing & X &  \\
\hline
\end{tabular}
}
\caption{Selected features for both methods.}
\label{table:ftselected}
\end{table}

\begin{figure}[!ht]
    \centering
  \includegraphics[width=3in,page=11]{graficos}
 \caption{ROC curves.}
\label{roc-curve}
\end{figure}



\begin{table}
\parbox{.45\linewidth}{
\begin{tabular}{c|cc}
 & \multicolumn{2}{c}{Predicted as} \\
Target & failure & success \\
\hline
failure & 28583 & 7964 \\
success & 1817 & 2824 \\
\hline
\end{tabular}
\caption{Confusion Matrix for the MI method for cutoff probability 50\%.}
\label{table:t3MI}
}
\hfill
\parbox{.45\linewidth}{
\begin{tabular}{c|cc}
 & \multicolumn{2}{c}{Predicted as} \\
Target & failure & success \\
\hline
failure & 36110 & 438 \\
success & 3746 & 894 \\
\hline
\end{tabular}
\caption{Confusion Matrix for the DSA method for cutoff probability 50\%.}
\label{table:t3DSA}
}
\end{table}

\begin{table}
\parbox{.45\linewidth}{
\begin{tabular}{c|cc}
 & \multicolumn{2}{c}{Predicted as} \\
Target & failure & success \\
\hline
failure & 445 & 36102 \\
success & 25 & 4616 \\
\hline
\end{tabular}
\caption{Confusion Matrix for the MI method for cutoff probability 10\%.}
\label{table:t31MI}
}
\hfill
\parbox{.45\linewidth}{
\begin{tabular}{c|cc}
 & \multicolumn{2}{c}{Predicted as} \\
Target & failure & success \\
\hline
failure & 27969 & 8579 \\
success & 1431 & 3209 \\
\hline
\end{tabular}
\caption{Confusion Matrix for the DSA method for cutoff probability 10\%.}
\label{table:t31DSA}
}
\label{table:t31}
\end{table} 


It is interesting to observe from the confusion matrices that an increasing number of false positives (FP) turns MI slightly better in predicting contact outcome than DSA, whereas DSA is clearly better for smaller FPs. Nevertheless, as stated previously, the results of MI maybe preferable if accounted that the cost of making a call is far less than the benefits of hitting a customer willing of getting the product.
Another advantage of MI method is processing time, as results are obtained faster, since no heavy CPU consume is involved for modeling as it is the case of DSA. For the experiments conducted, MI procedure took just a few seconds, whereas DSA took around half a minute in an Intel\textsuperscript{TM} I3 processor. This is a highly relevant benefit if more complex model techniques are used, as DSA depends on a model being built first. Furthermore, such advantage may be particularly emphasized for larger datasets.



Comparing tables \ref{table:t12} and \ref{table:t1} several important remarks may be stated:

\begin{enumerate}
\item Both feature selection methods clearly achieve different results.
\item The most relevant features for one method is considered as little relevant for the other. Such is the case of ''emp.var.rate'' and ''cons.conf.idx''.
\item Contrarily to MI, the number of features selected by DSA remains the same along the 10-fold experiments.
\item The set of features selected by MI is bigger than that selected by DSA. 
\end{enumerate}

The previous remarks lead to an interesting analysis. It seems to be that a given estimate like DSA does not reveal the inherent dependency among strongly correlated features. This situation can occur if the evaluation estimate considers a value where the independent variable does not lead to big variations in the estimate, like in a possible flat portion in the regression curve. This may be observed from Tables \ref{table:t12} and \ref{table:t1}, where the features selected by DSA remain the same through the folds, contrarily to the MI method. Also, from the MI method, a feature can be discarded if there is not enough values as to get a good level of confidence, explaining why many variables were discarded in some folds, as shown in Table~\ref{table:t1}, column NF.

It is possible to observe that MI has not taken into account the variables ''conf.price.idx'', ''default'' and ''previous''; such finding has risen the interest in analyzing how each of these variables are related to the remaining selected by MI. This analysis can be made from Tables \ref{table:t2},  \ref{table:t22} and \ref{table:t23}. Table \ref{table:t2} shows that ''conf.price.idx'' can be totally predicted by ´´cons.conf.idx´´ and ´´nr.employed´´. 
A similar case occurs with ´´previous´´, this variable can also be almost totally predicted by other selected variables, as it can be seen from 
Table \ref{table:t22}. Also, Table \ref{table:t23} shows that most of the information carried out by ´´default´´ is carried by other of the selected variables, thus it can be discarded.

Another interesting result comes out from the confusion matrices shown in Tables \ref{table:t3MI} and \ref{table:t31MI}. Similar behavior results for the two cutoff probability of success, it can be seen that the MI method gives confusion matrices with very good specificity and bad sensitivity. Thus, the model hits many customers willing to get the product while at the same time failing by contacting many clients that would reject the offer. Hence, for the empirical experiments conducted, DSA may be qualified as as more conservative, while the MI method is preferred when the cost of making a call is low and the income of selling the product is high.

The ROC curves are similar, although the one obtained from the MI method is slightly worse given its greater number of false positives, as shown in Figure~\ref{roc-curve}.

\section{Conclusions}\label{sec:conc}
In this study, a comparison was conducted between two renowned feature selection methods, mutual information (MI) and the data-based sensitivity analysis (DSA). Advantages and disadvantages of both methods were shown. This important information can be used to decide the best method to use in a particular application. For the empirical procedure, a dataset containing more than forty thousand of problem instances of the case of bank telemarketing was chosen. In this experiment, the advantages of applying the information theory concepts in order to eliminate redundant attributes were translated in a small subset of highly relevant features which enabled modeling faster and more accurately the outcome of clients subscribing or not a deposit. Also, the method allows getting the information content easily and rapidly, which allows that to be applied to big data sets. Since variables carrying most of the information of the output variable are selected by the proposed method, results have shown that a simple prediction algorithm such as logistic regression can be performed with good modeling results. 
On the other side, the data-based sensitivity analysis has the disadvantage of requiring a model for extracting feature relevance. Such drawback can halt a data mining project if the initial dataset holds a high number of features. Nevertheless, DSA does not require to dive deeply into the model for understanding which features are lesser relevant, for it is based on assessing outcome variation by also changing input features through their range of possible values. Using the tuned set of features obtained from each methods, in a total of 13 features for MI and 9 features for DSA from the initial 20, it is possible to observe from a logistic regression model built on each of these two sets that the receiver operating characteristic curve from DSA outperforms MI model in the lower values of false positives, while MI is slightly better for a higher false positive ratio. Thus, if the goal of marketing managers is to reduce the number of calls made at the cost of eventually loosing some successful contacts (true positives), then DSA feature selection resulted better for this case; otherwise, MI's feature selection took a small lead. Such conclusion is highlighted in the confusion matrices obtained, with MI's model achieving better results for predicting successes, while DSA outperforms MI on predicting failed contacts, i.e., when the client refused the deposit offered. For this specific case, loosing a successful contact implicates eventual loss of the client's financial asset, thus it is preferable to achieve a higher accuracy on predicting successes at the expense of wasting additional calls on unfruitful contacts.
Nevertheless, the results are conclusive in that MI, although a rather old method, still achieves results comparable to other more recent methods, such as DSA.






\section*{Acknowledgements}
One of us (NRB) would like to thank Universidad Nacional de Tres de Febrero for financial support under grant no. 32/15 201.


\bibliography{paper}

\section*{The authors}

\subsection*{Nestor R. Barraza}
received his Msc. and Ph.D. in Electronic Engineering from the University of Buenos Aires
in 1993 and 1998. He is currently a full Professor in the National University of Tres de Febrero, Argentina. Hi is also with the School of Engineering of the University of Buenos Aires. He has authored and co-authored many scientific and academic papers and participated in several local and international conferences. His research interests include Information Theory and Coding, Communications, Business Intelligence and Software Reliability. 

\subsection*{S\'ergio Moro}

is an Assistant Professor at Instituto Universitário de Lisboa (ISCTE-IUL), and member of ISTAR-IUL and ALGORITMI Research Center. He holds a PhD in Information Sciences and Technologies and an MSc in Management Information Systems, both from ISCTE-IUL, and a 5 year BSc in Computer Engineering from Instituto Superior Técnico (University of Lisbon). His research appears in journals such as Decision Support Systems, Expert Systems with Applications, and Journal of Business Research. He has worked for 15 years (2001-2016) at Montepio Bank, as a Software Engineer, Project Manager and Business Intelligence \& Analytics Manager.

\subsection*{Marcelo Ferreyra}
is the founder of powerhouse, a Data Mining software tool. He is currently CEO at dataxplore and independent consultant of Business Intelligence and Data Mining. 

\subsection*{Adolfo de la Pe\~na}
received his Msc. in Electronic Engineering from the University of Buenos Aires
in 1990 and his Msc. in Business Administration from the University of Salvador in 2013. He is currently Engineering Manager at the Computer Department at Boldt Gaming S.A. His research interest includes Business Intelligence and Data Mining.

\end{document}